%% file: main.tex
\documentclass{article} 
\usepackage{iclr2026_conference,times}

\input{math_commands.tex}

\usepackage{hyperref}
\usepackage{url}
\usepackage{graphicx}
\usepackage{xcolor}

\usepackage{amsthm}
\usepackage{amsfonts}
\usepackage{amsmath}
\usepackage{wrapfig}
\usepackage{algorithm}
\usepackage{algorithmic}
\usepackage{multirow}
\usepackage{booktabs}
\usepackage{subcaption}
\usepackage{bbm}
\usepackage{amsthm}

\newtheorem{theorem}{Theorem}[section]

\newcommand{\algo}{FastFlow}
\newcommand{\bigO}{\mathcal{O}}

\title{FastFlow: Accelerating The Generative Flow Matching Models with Bandit Inference} 

\author{
\textbf{Divya Jyoti Bajpai}\textsuperscript{1},
\textbf{Dhruv Bhardwaj}\textsuperscript{2},
\textbf{Soumya Roy}\textsuperscript{2},
\textbf{Tejas Duseja}\textsuperscript{2},\\
\textbf{Harsh Agarwal\textsuperscript{2}},
\textbf{Aashay Sandansing\textsuperscript{1}},
\textbf{Manjesh K. Hanawal\textsuperscript{1}}
\\[0.5em]
\textsuperscript{1}Indian Institute of Technology Bombay \\
\textsuperscript{2}Amazon
\\
\\
\texttt{\{divyajyoti.bajpai, 23d1594, mhanawal\}@iitb.ac.in} \\
\texttt{\{dhruvbrd, dusejat, hragarwl\}@amazon.com}\\
\texttt{meetsoumyaroy@gmail.com}
}

\iclrfinalcopy 
\begin{document}

\maketitle

\input{chapters/abstract}

\input{chapters/introduction}

\input{chapters/related_works}

\input{chapters/methodology}

\input{chapters/experiments}

\input{chapters/conclusion}


\bibliography{iclr2026_conference}
\bibliographystyle{iclr2026_conference}

\appendix

\input{chapters/appendix}

\end{document}

%% file: math_commands.tex
\usepackage{amsmath,amsfonts,bm}

\def\eqref#1{equation~\ref{#1}}

\def\1{\bm{1}}

\DeclareMathAlphabet{\mathsfit}{\encodingdefault}{\sfdefault}{m}{sl}
\SetMathAlphabet{\mathsfit}{bold}{\encodingdefault}{\sfdefault}{bx}{n}

\newcommand{\R}{\mathbb{R}}



%% file: chapters/abstract.tex
\begin{abstract}
Flow-matching models deliver state-of-the-art fidelity in image and video generation, but the inherent sequential denoising process renders them slower. Existing acceleration methods like distillation, trajectory truncation, and consistency approaches are static, require retraining, and often fail to generalize across tasks. We propose FastFlow, a plug-and-play adaptive inference framework that accelerates generation in flow matching models. FastFlow identifies denoising steps that produce only minor adjustments to the denoising path and approximates them without using the full neural network models used for velocity predictions. The approximation utilizes finite-difference velocity estimates from prior predictions to efficiently extrapolate future states, enabling faster advancements along the denoising path at zero compute cost. This enables skipping computation at intermediary steps. We model the decision of how many steps to safely skip before requiring a full model computation as a multi-armed bandit problem. The bandit learns the optimal skips to balance speed with performance. FastFlow integrates seamlessly with existing pipelines and generalizes across image generation, video generation, and editing tasks. Experiments demonstrate a speedup of over $2.6\times$ while maintaining high-quality outputs. The source code for this work can be found in \url{https://github.com/Div290/FastFlow}.
\end{abstract}

%% file: chapters/introduction.tex
\section{Introduction}
Recently, \textbf{flow-matching (FM) models} \cite{lipman2022flow} have emerged as an effective approach for visual generation, offering both high fidelity and computational efficiency. By learning continuous vector fields that transport simple distributions to complex data distributions, they generate samples along smooth, iterative trajectories. Unlike diffusion models \cite{croitoru2023diffusion}, FM achieves faster convergence and fewer sampling steps while maintaining comparable or better perceptual quality. This framework allows precise control over the generative process, where increasing the number of flow integration steps typically improves perceptual quality in both images and videos.
Despite these advances, \textbf{inference speed remains a major bottleneck} \cite{yan2024perflow, davtyan2025faster} due to the several reverse denoising steps that are performed sequentially.  As model sizes grow and generation tasks demand higher resolutions or longer video durations, the computational cost becomes prohibitive, resulting in substantial latency during inference.  

Several acceleration strategies---such as \textit{distillation} \cite{luhman2021knowledge, yan2024perflow, kornilov2024optimal}, \textit{trajectory truncation} \cite{dhariwal2021diffusion, lu2022dpm, liu2025timestep}, and \textit{consistency training} \cite{yang2024consistency, zhang2025inverse, dao2025self}---have been proposed. While effective, these approaches have limitations: they require additional training phases, rely on large-scale data, and incur non-trivial computational overhead. Moreover, they apply a uniform inference schedule across all inputs, overlooking the fact that some samples may converge with fewer steps, while others require longer trajectories to maintain fidelity. This one-size-fits-all design leads to inefficiencies, as many intermediate steps contribute little to the final quality.  

In this work, we propose a \textbf{novel adaptive inference framework} that reduces cost by \emph{approximating redundant intermediate denoising steps} instead of fully computing them. Our approach builds on the observation that flow-matching models often follow approximately linear denoising trajectories \cite{lipman2022flow} as they are trained to follow linear paths. Leveraging this property, we approximate future states using \textbf{Taylor series expansions} and \emph{local velocity estimates} derived from the model dynamics, thereby reducing the number of expensive forward passes. We show that our approximation is sound by establishing a theoretical bound on the deviation of the final state of the approximate trajectory from that of the full model. When approximation fails to maintain fidelity, the system reverts to full model predictions. The central challenge, therefore, is to determine when approximation suffices and when precise computation is necessary.  

We address this challenge by formulating the decision process as a \textbf{multi-armed bandit (MAB) problem}. At each step, a bandit adaptively selects how many future steps can be approximated before requiring the next full model evaluation. Each arm corresponds to the number of steps to approximate, and a subsequent full evaluation provides feedback to assess approximation accuracy. The reward balances two competing objectives: (i) reducing computational cost by skipping evaluations, and (ii) limiting deviation from the true model trajectory. This adaptation allows the system to adjust inference complexity on a per-sample basis, \textbf{learning over time} when approximation is reliable and when exact prediction is necessary. Fig.~\ref{fig:main} illustrates our method (details in Section \ref{sec:our_method}).

\begin{figure}
    \centering
     \includegraphics[scale=0.45]{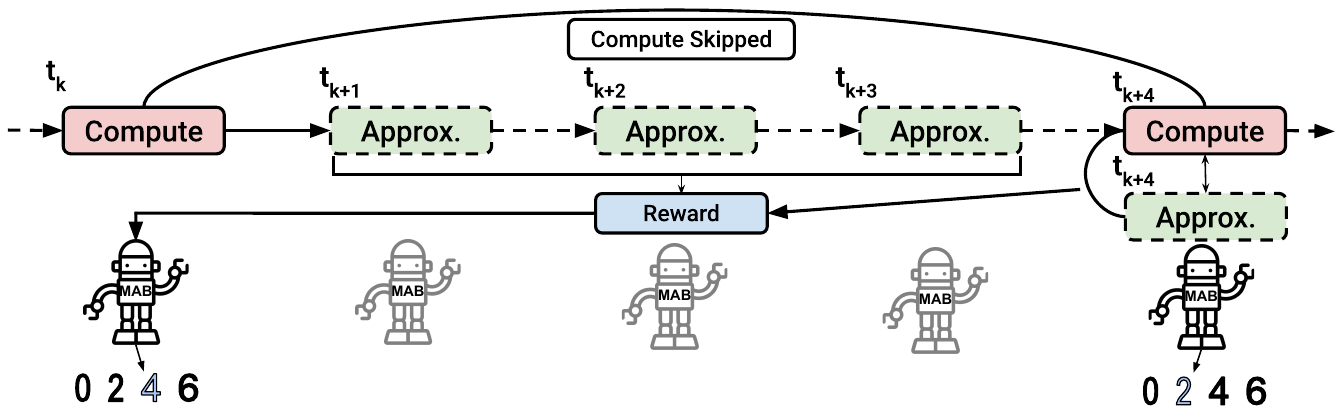}
    \caption{Overview of our method. At each step, the multi-armed bandit (MAB) selects the number of steps to approximate the trajectory. The bandit receives a reward proportional to the number of steps successfully approximated, while deviations from the computed velocity incur a penalty. This adaptive strategy allows the model to balance efficiency and accuracy across the trajectory.}
    \label{fig:main}
    \vspace{-0.5cm}
\end{figure}

In existing caching-based acceleration approaches, \textbf{TeaCache} \cite{liu2025timestep}, caches residuals and re-uses them at later inference steps. However, they use a hand-crafted relative-$L1$ distance-based criteria to decide if a cache can be reused. In our experiments, we also observed that when a specific speedup (e.g., $2\times$) is required, TeaCache unintentionally ends up with a fixed caching schedule across generations, consistently skipping the same subset of timesteps regardless of input complexity. Finally, TeaCache relies on handcrafted polynomial fitting of noisy inputs, which typically requires prior model- or task-specific knowledge to perform optimally (see Appendix \ref{sec:comparison}).


Our approach is both \textbf{efficient} and \textbf{adaptive}: at every timestep, the bandit dynamically learns to minimize redundancy by adapting to the complexity of the data distribution. Unlike prior acceleration strategies, our method introduces \textit{zero retraining overhead}, requires no auxiliary networks, and integrates seamlessly as a true \textbf{plug-and-play} solution. In summary, our key contributions are:

\begin{itemize}
    \item We propose FastFlow, a method to accelerate visual generation skipping redundant denoising steps, while using a simple Euler-solver for flow, and a first-order Taylor series expansion for velocities. The trade-off of speed v/s error is set up as a Multi-Armed Bandit (MAB) objective, enabling the model to dynamically learn when full computation is necessary.
    \item We establish a theoretical bound on the deviation of the final state from the approximated trajectories with that obtained by the full model (see Thm. \ref{Thm:Bound}).
    \item Our framework is model-agnostic, requires no retraining or auxiliary networks, and can be seamlessly integrated into existing flow-matching pipelines, making it a practical and general solution for faster visual generation.

    \item Extensive experiments across image generation, video generation, and image editing demonstrate more than $2.6\times$ speedup while maintaining generation quality, showing that our method achieves acceleration without sacrificing fidelity.

\end{itemize}

%% file: chapters/related_works.tex
\section{Related works}

Recently, Flow Matching \cite{lipman2022flow, dao2023flow, labs2025flux, deng2025emerging} has gained prominence as a strong counterpart to diffusion models \cite{croitoru2023diffusion, xing2024survey, yang2023diffusion, zhu2023conditional}, since it establishes a deterministic correspondence between random noise and data. This benefits applications such as image inversion \cite{deng2024fireflow}, editing \cite{wang2024taming}, and video synthesis \cite{kong2024hunyuanvideo, wan2025wan}, where reduced randomness leads to faster sampling with fewer neural evaluations. Multimodal variants, like FlowTok \cite{he2025flowtok}, compress text and images into a joint token space to improve inference speed, and large-scale systems like FLUX.1 \cite{labs2025flux} demonstrate that flows can approach the performance of diffusion models at low compute cost. In video domains, Pyramidal Flow Matching \cite{jin2024pyramidal} cuts down complexity using hierarchical generation.

Nonetheless, the reliance on iterative sampling continues to hinder real-time deployment. To mitigate this, most acceleration work has involved retraining-based schemes \cite{lee2023minimizing, bartosh2024neural}. Knowledge distillation methods \cite{luhman2021knowledge, song2023consistency, liu2022flow, kornilov2024optimal, salimans2022progressive}, including InstaFlow \cite{liu2023instaflow}, LeDiFlow \cite{zwick2025lediflow}, and Diff2Flow \cite{schusterbauer2025diff2flow}, leverage diffusion priors for one- or few-step generation, while PeRFlow \cite{yan2024perflow} simplifies trajectories via piecewise rectified flows. Sampling-acceleration approaches \cite{dhariwal2021diffusion, lu2022dpm, shaul2023bespoke}, such as TeaCache \cite{liu2025timestep}, skip redundant steps using timestep embeddings, whereas consistency-based methods \cite{yang2024consistency, zhang2025inverse, haber2025iterative, dao2025self} combine adversarial and consistency objectives for high-fidelity few-step synthesis. Additionally, higher-order pseudo-numerical solvers, as in PNDM \cite{pndm2022}, improve the speed–quality tradeoff in diffusion inference.

While existing methods remain static and often struggle to generalize across different models, tasks, and datasets, \algo{} offers a universally compatible solution for any FM-based model. It dynamically adapts, identifying and skipping redundant steps based on the incoming data distribution, while the finite-difference approximation boosts efficiency to achieve significant speedups. Remarkably, \algo{} is training-free and incurs negligible computational overhead, making it both practical and highly effective in real-world scenarios.

%% file: chapters/methodology.tex
\section{Methodology}
In this section, we provide details of our method, starting with a detailed description of the flow matching models and then detailing our application to the flow matching models.

\subsection{Flow Matching Overview}\label{sec:flow_matching}
Consider two probability densities \( \pi_0 \) and \( \pi_1 \) on \( \mathbb{R}^d \), representing the source and target distributions. \textbf{Flow Matching (FM)} seeks to learn a deterministic, time-continuous flow that transports samples from \( \pi_0 \) to \( \pi_1 \), governed by an ordinary differential equation (ODE).  
Formally, FM introduces a time-dependent velocity field \( v : \mathbb{R}^d \times [0,1] \to \mathbb{R}^d \), giving rise to the initial value problem
\begin{equation}
\frac{dx_t}{dt} = v(x_t, t), \quad x_0 \sim \pi_0, \quad t \in [0, 1].
\end{equation}
Here, \( x_t \in \mathbb{R}^d \) denotes the sample state at time \( t \), while the velocity field \( v(x_t,t) \), typically parameterized by a neural network, is optimized such that the terminal distribution at \( t=1 \) matches \( \pi_1 \). This ODE defines a flow map \( \Phi_t(x_0) \) that evolves samples along continuous trajectories, with \(\Phi_1(x_0) \sim \pi_1\). The central task in FM is thus to learn a velocity field \( v(x_t,t) \) that realizes this transport.  

\textbf{Inference:}  
As closed-form solutions for \( x_t \) are generally unavailable for learned velocity fields, numerical solvers are required. FM most often employs the \textbf{forward Euler method} for its simplicity and efficiency. The interval \( [0,1] \) is discretized into steps \( \{ t_0=0, t_1, \ldots, t_K=1 \} \), possibly with non-uniform intervals. Starting from \( x_0 \sim \pi_0 \), the trajectory is advanced as
\begin{equation}\label{eq:inference}
x_{t_{k+1}} = x_{t_k} + \Delta t_k \cdot v(x_{t_k}, t_k), \quad \Delta t_k = t_{k+1} - t_k.
\end{equation}
This discretization approximates the continuous flow with a finite sequence of updates, where each step moves the sample in the direction given by the velocity field. Owing to its low computational cost and suitability for parallel hardware, Euler’s method remains the default choice in most FM implementations.


For our method, we need to find all the redundant denoising steps, as there will be many due to straight line trajectories that are learned during training and then replace them using some good approximation of the true model predictions.

\subsection{Our method}\label{sec:our_method}

\textbf{Approximating velocity.}  
To accelerate sampling, we look at a simple mechanism to approximate velocities at different timesteps, instead of re-computing from the model. Using the first-order Taylor series expansion for $x_{t+\Delta t}$ and taking a time derivative results in:
\begin{equation}\label{eq:taylor}
v(x_{t+\Delta t}, t + \Delta t) := \frac{dx_{t+\Delta t}}{dt} = v(x_t, t) + \Delta t \cdot \frac{dv(x_t, t)}{dt}
\end{equation}
A natural direction for approximation is to use the most recent velocity estimate computed from the model assuming $\frac{dv(x_t, t)}{dt}\rightarrow 0$.
In previous works, this is accompanied by a static criteria to decide whether re-computation from the model is necessary. 

However, we find that even in the regions where velocity seems to be smooth and linear, it makes minor adjustments, ignoring those leads to accumulated errors during generation (see Figure \ref{fig:velocity_motivation}), making above strategy overly simplistic for aggressive skipping.

Instead of re-using the same velocity estimate, we update it using \textbf{ finite-difference approximation} of Eq. \ref{eq:taylor}, utilizing the past velocity estimates. We write this approximation at discrete time steps as follows (where $p<k$):
\vspace{-0.4cm}


\begin{equation}
\vspace{-0.2cm}
v(x_{k+1}, t_{k+1}) \approx  v(x_{k}, t_{k}) + \Delta t_k \cdot \frac{v(x_{t_k}, t_k) - v(x_{t_p}, t_p)}{t_{k} - t_p}.
\end{equation}

Below we establish a bound on the deviations in the flow value of the approximated velocity estimates while skipping a set of time-steps during inference with that obtained by the full model.

\begin{theorem}
\label{Thm:Bound}
Let $\{ x_{t_k}^{\mathrm{true}} \}$ denote the trajectory obtained using the exact velocity field with the forward Euler method, and let $\{ x_{t_k}^{\mathrm{approx}} \}$ be the trajectory where velocity evaluations are skipped at a subset of steps $\mathcal{S} \subseteq \{0, \dots, T-1\}$ and are instead approximated, for simplicity, via a first-order Taylor expansion in time.
Under assumptions of smoothness of velocity field, the cumulative error in the final state after $T$ steps with a uniform step size $\Delta t = 1/T$ is bounded by:
\[
e_T := \| x_{t_T}^{\mathrm{approx}} - x_{t_T}^{\mathrm{true}} \| = \bigO\left( \frac{|\mathcal{S}|}{T^3} \right).
\]
\vspace{-0.2cm}
\end{theorem}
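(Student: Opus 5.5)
We compare the two Euler recursions step by step and track the error through a discrete Grönwall argument. Write $x_k := x^{\mathrm{true}}_{t_k}$, $y_k := x^{\mathrm{approx}}_{t_k}$, and $e_k := \|y_k - x_k\|$, with $e_0 = 0$ since both trajectories start from the same sample. The smoothness assumptions we make precise are:
\begin{itemize}
\item[(A1)] $v$ is $L$-Lipschitz in $x$, uniformly in $t$;
\item[(A2)] $v$ is $C^2$ along trajectories, with the second total time derivative $\ddot v = \frac{d^2}{dt^2} v(x_t,t)$ bounded by $M$.
\end{itemize}

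At a non-skipped step $k \notin \mathcal{S}$ both recursions use the model velocity, one at $y_k$ and one at $x_k$. Subtracting the two updates gives
\[
e_{k+1} \le (1+L\Delta t)\, e_k .
\]

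At a skipped step $k \in \mathcal{S}$ the approximate trajectory uses $\hat v_k = v(y_j,t_j) + (t_k - t_j)\,\dot v_j$, a first-order Taylor expansion from the last evaluated step $j$. We split $\hat v_k - v(x_k,t_k)$ into two parts:
\begin{itemize}
\item a propagation part, bounded by $L e_j$ together with the error in the derivative estimate, which is Lipschitz-controlled in the same way;
\item a Taylor remainder, of size $\tfrac12 M (t_k - t_j)^2$ by (A2).
\end{itemize}
If the skip gap $t_k - t_j$ is bounded by a fixed number of steps (the bandit's arm set is finite), the remainder is $\bigO(\Delta t^2)$. After multiplying by $\Delta t$ in the Euler update, the local injected error is $\bigO(\Delta t^3)$. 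Hence
\[
e_{k+1} \le (1 + C L \Delta t)\, \max_{j\le k} e_j + C' M \Delta t^3 .
\]

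Unrolling the recursion over $T$ steps, with $\mathbb{1}_{\mathcal{S}}$ the indicator of the skipped set, gives
\[
e_T \le \sum_{k} (1 + C L\Delta t)^{T-k}\, C' M \Delta t^3\, \mathbb{1}_{\mathcal{S}}(k) \le e^{CL}\, C' M\, |\mathcal{S}|\, \Delta t^3 .
\]
The factor $(1+CL\Delta t)^T \le e^{CL}$ is independent of $T$. Substituting $\Delta t = 1/T$ yields $e_T = \bigO(|\mathcal{S}|/T^3)$.

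The main obstacle is the velocity estimate used at skipped steps. The statement says "first-order Taylor expansion in time", but the paper actually uses a finite-difference slope $(v_k - v_p)/(t_k - t_p)$. The difference matters:
\begin{itemize}
\item The slope must be accurate to $\bigO(\Delta t)$ for the remainder to stay $\bigO(\Delta t^2)$. This holds if the two evaluation points are $\bigO(\Delta t)$ apart, and it is the reason for assumption (A2).
\item The Lipschitz constant of the approximate velocity map picks up a factor from the finite-difference extrapolation, roughly $(1 + 2(t_k-t_j)/(t_j-t_p))$. This factor is bounded only if the skip length is bounded relative to the gap between the two evaluations.
\end{itemize}
So I will assume explicitly that both the skip lengths and these ratios are bounded by a constant. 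I will state this as the precise content of "smoothness" in the theorem, making the constants uniform and the Grönwall factor $\bigO(1)$.
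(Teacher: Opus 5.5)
Your argument is correct at the level of rigor the statement allows, and it shares the paper's skeleton: a recursion $e_{k+1}\le(1+L\Delta t)e_k+\mathbbm{1}_{\{k\in\mathcal{S}\}}\,\bigO(\Delta t^3)$, unrolled with $(1+L\Delta t)^T\le e^{L}$. The difference is in how you split the velocity mismatch at a skipped step.

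The paper inserts $v(x^{\mathrm{approx}}_{t_k},t_k)$. The propagated part is then simply $L_x e_k$, and the recursion closes in $e_k$ alone with the explicit constant $\frac{M e^{L_x}}{2}$. It then bounds $\|\tilde v_k - v(x^{\mathrm{approx}}_{t_k},t_k)\|$ by $\frac{M}{2}(\Delta t)^2$ using only $\|\partial_t^2 v\|\le M$ at fixed $x$. This tacitly treats every skipped velocity as a one-step expansion in $t$ at the current state. That step controls neither the spatial displacement from the last evaluated state (an expansion from an earlier point involves the total time derivative along the path) nor runs of consecutive skips, where the remainder at the $i$th skipped step is of order $M(i\Delta t)^2$.

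You instead anchor the Taylor remainder on the true trajectory at the last evaluated index $j$, and propagate through the evaluation points $j,p$ by Lipschitz continuity. This costs you three things: a total-derivative hypothesis (A2), a max-type recursion (run it on $E_k=\max_{i\le k}e_i$), and explicit bounds on skip length and on $r=(t_k-t_j)/(t_j-t_p)$. In return it makes the hidden dependence visible. The local term scales with the square of the maximal skip length, and the Gr\"onwall factor becomes $e^{(1+2r)L}$. So $\bigO(|\mathcal{S}|/T^3)$ genuinely needs skips bounded independently of $T$, which the finite arm set supplies.

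Two details to tighten:
\begin{enumerate}
\item With an exact Taylor derivative (as the statement literally says) rather than a finite difference, ``Lipschitz-controlled in the same way'' needs an extra hypothesis, such as Lipschitz continuity in $x$ of the derivative used. For the finite difference it follows from (A1) exactly as you computed.
\item (A2) refers to continuous trajectories, while $x_p,x_j,x_k$ are Euler nodes. You should note that over a bounded number of steps these differ by $\bigO(\Delta t^2)$, which adds only $\bigO(\Delta t^2)$ to the velocity remainder.
\end{enumerate}
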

\vspace{-0.2cm}
The proof of this theorem can be found in Appendix~\ref{theoretical_just}. This result shows that the final error grows linearly with the number of steps skipped and thus provides a formal guarantee on the stability of our approximation scheme. 

\textbf{Deciding Redundant Steps.}  
A central challenge in our framework lies in determining \emph{when} to perform a model evaluation versus \emph{when} to rely on an approximation. Since each approximation inevitably introduces error, uncontrolled propagation may cause the trajectory to deviate significantly from the true dynamics. Moreover, the tolerance to approximation errors can vary across samples of different complexity, implying that the decision criterion must adapt dynamically to the evolving data distribution. Thus, we cast the problem of detecting redundant steps as an \emph{online sequential decision-making problem}, formalized via the Multi-Armed Bandit (MAB) framework.  

In an MAB setup, an agent iteratively selects actions from a finite set, aiming to maximize cumulative reward while balancing \emph{exploration} of uncertain actions and \emph{exploitation} of actions known to yield high rewards. At timestep $t_k$, let $\mathcal{A}_{t_k}$ denote the action set, where each action $\alpha_{t_k} \in \mathcal{A}_{t_k}$ corresponds to skipping $\alpha_t$ steps before the next model evaluation. A separate bandit is instantiated at each timestep, learning an adaptive policy for choosing $\alpha_t$ based on approximation performance.  

Let $v(x_{t_k}, t_k)$ denote the true model velocity, $\hat{v}(x_{t_k}, t_k)$ its approximation under the chosen skip strategy, and $\ell(\cdot,\cdot)$ is a discrepancy measure (e.g., mean-squared error).  We define the reward associated with action $\alpha_{t_k}$ as
\begin{equation}
    r(\alpha_{t_k}) = \mu \cdot \alpha_{t_k} - \ell\big(\hat{v}(x_{t_k}, t_k), v(x_{t_k}, t_k)\big),
\end{equation}
The scalar $\mu > 0$ balances the trade-off between efficiency (favoring larger $\alpha_t$) and accuracy (penalizing deviation from the true velocity).  

This reward structure formalizes the intuition that skipping more steps accelerates inference, but incurs a penalty proportional to the local error. The MAB objective then becomes
$
    \max_{\pi} \; \mathbb{E}\!\left[\sum_{t=1}^T r(\alpha_{t_k}) \right],
$
where $\pi$ denotes the adaptive policy that maps history of past rewards and actions to the choice of $\alpha_t$. By construction, the optimal policy $\pi^\star$ learns to exploit redundancies in locally smooth regions of the trajectory while reverting to exact model evaluations in regions of high curvature or instability.  

\textbf{Algorithm.}
Algorithm~\ref{alg:bandit_approx} presents the pseudo-code of \algo{}. The procedure begins with initialization: we specify the timestep grid, the velocity prediction model $\mathcal{M}$, the action sets ${\mathcal{A}_{t_k}}$ available to each bandit $\mathcal{B}_{t_k}$, and the trade-off parameter $\mu$. Each bandit is then initialized from a full generation using the first prompt, ensuring that each action is at least played once.

At inference time, when the trajectory reaches a state $x_{t_k}$, the corresponding bandit $\mathcal{B}_{t_k}$ selects a skip length $m:=\alpha_{t_k}$ via an upper-confidence bound strategy (line 5). This choice reflects a balance between exploration of new skip patterns and exploitation of those that have yielded high reward. The trajectory then advances $m$ steps using finite-difference extrapolation, followed by an exact evaluation of $\mathcal{M}$ at the terminal point.

The reward couples efficiency with reliability: longer skips are encouraged by the term $\mu \cdot \alpha_{t_k}$, but this gain is counterbalanced by a velocity mismatch loss that anchors accuracy. Concretely, if $\alpha_{t_k}=m$, the extrapolated velocity $\hat v(x_{t_{k+m+1}}, t_{k+m+1})$ is contrasted with the true velocity $v(x_{t_{k+m+1}}, t_{k+m+1})$. This loss is crucial, as it directly measures the drift introduced by approximation: even small velocity errors accumulate along the trajectory, so penalizing the mismatch ensures stability. By continually updating bandit statistics under this trade-off, \algo{} adapts its policy across timesteps, recomputing when approximation would deviate significantly, while exploiting skips where the loss remains small.

\textbf{Computational Complexity of Bandits:} \algo{} employs multi-armed bandits (MABs) to determine the number of steps to skip and approximate. MABs are computationally lightweight, adding negligible overhead, as they only maintain a list of rewards computed as in line 5 of Algorithm \ref{alg:bandit_approx}. This efficiency is further confirmed empirically in our experiments.

\textbf{\algo{} and Linear multi-step solvers:}
We reformulate an $m$ skip due to our method, on top of a Euler solver as a multi-step update in Appendix~\ref{lmls_methods}.

\begin{algorithm}[t]
\caption{\algo{}: Bandit-driven approach for accelerated Flow Matching inference}
\label{alg:bandit_approx}
\begin{algorithmic}[1]
\REQUIRE 

$x_{t_0}$: Initial flow state 

$\{t_0, t_1, \dots, t_T\}$: Timesteps

$\mathcal{M}$: Velocity model

$\{\mathcal{B}_{t_k}\}_{k=0}^{T-1}$: Bandit agents

$\{\mathcal{A}_{t_k}\}$: Action sets for bandit agents

$\mu, \gamma = 2.0$: Trade-off parameter, Exploration constant.

$k$: Time index (loop variable)

$p$: Time index of most recent (actual) velocity 
evaluation

$m$: Skip length (in number of time indices)
\STATE Initialize the mean arm rewards $Q$ and arm counts $N$ for bandit agents $\{\mathcal{B}_{t_k}\}_{k=0}^{T-1}$ using the first prompt.
\STATE Compute initial velocities $v(x_{t_0}, t_0), v(x_{t_1}, t_1) \gets \mathcal{M}(x_{t_0}, t_0), \mathcal{M}(x_{t_1}, t_1)$ and set $p\gets0$.
\STATE $x_{t_2} \gets x_{t_1} +  v(x_{t_1}, t_1)\cdot(t_1-t_0) .$
\STATE $k \gets 2$
\WHILE{$k \leq T-1$}
    \STATE $n\gets$ number of time $\mathcal{B}_{t_k}$ is invoked.
    \STATE Bandit $\mathcal{B}_{t_k}$ selects skip length
    $
    m :=\alpha_{t_k} \gets \arg\max_{\alpha \in \mathcal{A}_{t_k}}
    \Bigg[ Q(\alpha) + \gamma \sqrt{\frac{\ln n}{N(\alpha)}} \Bigg].
    $
    \STATE 
    Define $\hat v(t_k, t_p, \Delta t) := v(x_{t_{k}}, t_{k}) + \Delta t \cdot \frac{v(x_{t_k}, t_k) - v(x_{t_{p}}, t_{p})}{t_{k} - t_{p}}; (p<k).
    $
    \IF{$m>0$}

    \STATE $\Delta t_{k+m, k} \gets t_{k+m} - t_k .$
    \STATE $x_{t_{k+m}} \gets x_{t_{k}} +  \hat v(t_k, t_p, \Delta t_{k+m,k})\cdot\Delta t_{k+m,k}.$
    \ENDIF
    \STATE $\Delta t_{k+m+1,k+m} \gets (t_{k+m+1} - t_{k+m}).$
    \STATE $x_{t_{k+m+1}} \gets x_{t_{k+m}} +  \hat v(t_k, t_p, \Delta t_{k+m+1,k+m}) \cdot\Delta t_{k+m+1,k+m}.$
    \STATE $v(x_{t_{k+m+1}}, t_{k+m+1}) = \mathcal{M}(x_{t_{k+m+1}}, t_{k+m+1}).$
    
    \STATE Compute reward:
    $r(\alpha_{t_k}) = \mu \cdot \alpha_{t_k}
    - \ell\!\left(\hat v(t_k, t_p, \Delta t_{k+m+1,k+m}),\, v(x_{t_{k+m+1}}, t_{k+m+1})\right).$
    \STATE Update bandit $\mathcal{B}_{t_k}$ statistics: $N(\alpha_{t_k}) \gets N(\alpha_{t_k})+1, \quad 
    Q(\alpha_{t_k}) \gets \frac{\sum_{j=1}^{k} r(\alpha_j)\,\mathbbm{1}_{\{\alpha_j=\alpha_{t_k}\}}}{N(\alpha_{t_k})}.$
    \STATE $p\gets k; \quad k\gets k+m$
\ENDWHILE
\ENSURE Final trajectory $\{x_{t_k}\}_{k=0}^T$.
\end{algorithmic}
\end{algorithm}

%% file: chapters/experiments.tex
\section{Experiments}
We evaluate our approach across text-to-image generation, image editing, and text-to-video generation. Below we describe the datasets used in each setting.

\begin{table}[t]
\centering
\small
\setlength{\tabcolsep}{5pt}
\renewcommand{\arraystretch}{1.05}
\begin{tabular}{l|cccccc|c|c|cc}
\toprule
Method & SO & TO & CT & CL & ATTR & PO & Overall $\uparrow$ & CLIPIQA $\uparrow$ & Spd. $\uparrow$ & Lat. $\downarrow$ \\
\midrule
\multicolumn{11}{c}{\textbf{Full Model}} \\
\midrule
Full 50 & 0.99 & 0.90 & 0.81 & 0.85 & 0.59 & 0.54 & \textbf{0.78} & \textbf{0.85} & 1.00$\times$ & 36.2 \\
Full 25 & 0.99 & 0.91 & 0.78 & 0.84 & 0.62 & 0.51 & \underline{0.77} & 0.82 & 2.00$\times$ & 19.5 \\
Full 10 & 0.99 & 0.88 & 0.68 & 0.84 & 0.56 & 0.48 & 0.74 & 0.75 & 5.00$\times$ & 07.3 \\
\midrule
\multicolumn{11}{c}{\textbf{Static Speedup Methods}} \\
\midrule
InstaFlow & 0.86 & 0.20 & 0.21 & 0.66 & 0.04 & 0.02 & 0.33 & 0.74 & \textbf{50.0}$\times$ & \textbf{01.5} \\
PerFlow & 0.99 & 0.79 & 0.44 & 0.85 & 0.25 & 0.15 & 0.58 & 0.80 & 5.00$\times$ & 08.2 \\
Teacache & 0.99 & 0.89 & 0.78 & 0.83 & 0.58 & 0.52 & 0.76 & 0.80 & 1.85$\times$ & 20.6 \\
\midrule
\multicolumn{11}{c}{\textbf{Ours (FastFlow)}} \\
\midrule
FastFlow-50 & 0.99 & 0.91 & 0.80 & 0.86 & 0.63 & 0.51 & \textbf{0.78} & \underline{0.83} & 2.65$\times$ & 13.7 \\
FastFlow-25 & 0.99 & 0.91 & 0.76 & 0.84 & 0.59 & 0.50 & \underline{0.77} & 0.80 & 4.54$\times$ & 08.6 \\
FastFlow-10 & 0.98 & 0.84 & 0.65 & 0.84 & 0.54 & 0.47 & 0.72 & 0.73 & \underline{7.14}$\times$ & \underline{05.5} \\
\bottomrule
\end{tabular}
\caption{Comparison of flow-matching acceleration methods. SO: Single Object, TO: Two Object, CT: Counting, CL: Color, ATTR: Color Attribute, PO: Position, Overall: Overall score, CLIPIQA: Perceptual Image Quality. Speedup is relative to full-50 step generation. Latency is average inference time per image (s).}
\label{tab:results_bagel_generate}
\vspace{-0.5cm}
\end{table}

\begin{table}[t]
\centering
\small
\setlength{\tabcolsep}{4.2pt}
\renewcommand{\arraystretch}{1.1}
\begin{tabular}{l|cccccc|c|c|cc}
\toprule
{Method} & SO & TO & CO & CL & ATTR & PO & Overall $\uparrow$ & CLIPIQA $\uparrow$ & Spd. $\uparrow$ & Lat. $\downarrow$ \\
\midrule
\multicolumn{11}{c}{\textbf{Full Model}} \\
\midrule
Full 50 & {0.98} & {0.79} & {0.73} & {0.78} & {0.44} & {0.21} & \textbf{0.65} & \textbf{0.84} & 1.00$\times$ & 33.8 \\
Full 25 & 0.98 & 0.78 & 0.71 & 0.76 & 0.43 & 0.18 & \underline{0.64} & 0.80 & 2.00$\times$ & 17.5 \\
Full 10 & 0.97 & 0.66 & 0.59 & 0.67 & 0.41 & 0.15 & 0.57 & 0.60 & \underline{5.00}$\times$ & \underline{07.1} \\
\midrule
\multicolumn{11}{c}{\textbf{TeaCache}} \\
\midrule
TeaCache-50 & {0.98} & 0.79 & 0.71 & 0.76 & 0.43 & \textbf{0.21} & \underline{0.64} & 0.80 & 1.91$\times$ & 18.3 \\
TeaCache-25 & 0.97 & 0.76 & 0.70 & 0.74 & 0.43 & 0.17 & 0.62 & 0.78 & 3.45$\times$ & 10.3 \\
\midrule
\multicolumn{11}{c}{\textbf{FlowFast (Ours)}} \\
\midrule
FlowFast 50 & 0.97 & 0.78 & 0.72 & 0.77 & 0.44 & 0.20 & \underline{0.64} & \underline{0.82} & 2.57$\times$ & 13.9 \\
FlowFast 25 & 0.97 & 0.78 & 0.71 & 0.75 & 0.42 & 0.18 & 0.63 & 0.79 & {4.21}$\times$ & {08.5} \\
FlowFast 10 & 0.95 & 0.64 & 0.54 & 0.66 & 0.40 & 0.14 & 0.55 & 0.57 & \textbf{7.59}$\times$ & \textbf{05.2} \\
\bottomrule
\end{tabular}
\caption{\textbf{Comparison of Full model, TeaCache, and FlowFast (ours).} Best values in each column are \textbf{bolded}. FlowFast achieves significantly better performance-efficiency trade-offs while maintaining competitive accuracy and perceptual quality.}
\label{tab:results_flux_generate}
\end{table}

\textbf{Datasets:} 
We use the \textbf{GenEval} benchmark \cite{ghosh2023geneval}, a collection of $553$ prompts designed to evaluate compositional reasoning in text-to-image generation. The prompts are organized to probe key abilities such as \emph{object occurrence}, \emph{spatial relations}, \emph{color binding}, and \emph{numerical consistency}, making GenEval a widely adopted standard for testing fine-grained semantic alignment.  

For image editing, we adopt the \textbf{GEdit} benchmark \cite{liu2025step1x}, which comprises $606$ real-world editing instructions in English. The instructions span a broad spectrum of operations—including \emph{object manipulation}, \emph{color changes}, \emph{layout adjustments}, and \emph{stylization}—allowing systematic evaluation of both localized edits and global scene transformations.  

To measure temporal and multimodal consistency in video generation, we use a subset of \textbf{VBench} dataset \cite{}. We construct a representative evaluation set by sampling $80$ prompts, uniformly selecting $5$ from each of the $16$ dimensions defined by the benchmark. This ensures balanced coverage across diverse factors such as \emph{motion dynamics}, \emph{object persistence}, \emph{camera control}, and \emph{scene composition}, yielding a challenging yet comprehensive testbed for generative video models.

\textbf{Baselines:} We compare our method against the following baselines:

\noindent\textbf{Full Generation:}  
The standard sampling procedure, where the model executes the complete denoising trajectory without acceleration. This serves as the fidelity upper bound and the reference point for all accelerated methods.  
\newline
\noindent\textbf{TeaCache:}  
TeaCache accelerates generation by caching intermediate representations and reusing them across timesteps. This eliminates redundant computation and reduces inference time, though fidelity can degrade due to approximations introduced in cached states.  
\newline
\noindent\textbf{InstaFlow:}  
A flow-matching–based sampler trained for ultra-fast generation (down to a single step) on the Stable-Diffusion-v1.5 model. While highly efficient, it sacrifices fidelity compared to full sampling. We evaluate InstaFlow using the released Stable-Diffusion-v1.5 weights.  
\newline
\noindent\textbf{PeRFlow (Piecewise Rectified Flow):}  
PeRFlow \cite{yan2024perflow} straightens the diffusion trajectory via piecewise-linear rectification over segmented timesteps, enabling few-step generation with favorable quality–efficiency tradeoffs. We report using the official Stable-Diffusion-XL checkpoints.  
\newline
\noindent\textbf{Ours:}  
Our approach accelerates inference by selectively approximating redundant steps. A Multi-Armed Bandit dynamically decides where to apply approximation, balancing efficiency with fidelity.

For all baselines, we adopt the official hyperparameters provided in their codebases. In Table \ref{tab:results_bagel_generate}, TeaCache is applied is as released, and in Figure \ref{fig:editing_quant}, we further evaluate it across timesteps to emphasise its plug-and-play flexibility. An ablation over $\mu$ is given in Figure \ref{fig:speevsacc}.

\textbf{Models.} To demonstrate the versatility of our approach, we evaluate across multiple state-of-the-art models: for image generation, BAGEL, Flux-Kontext, and PeRFlow; for image editing, BAGEL, Flux-Kontext, and Step-1X-Edit; and for video generation, HunyuanVideo.

\textbf{Hyperparameters.} We consider two key hyperparameters. (i) \emph{Arm set:} Each arm represents the number of steps to skip. Since the feasible skip length naturally decreases as the remaining steps shrink, we design the arm set adaptively with respect to the current generation step. For fairness, the arm set is kept fixed across models and tasks, and updated only when the generation horizon changes. (ii) \emph{Error scaling factor $\mu$:} To normalize rewards, we define
$\mu = \frac{\max_t \, \text{MSE}(\hat{v}_t, v_t)}{\text{total steps}}$,
where the maximum MSE is estimated from the first full generation pass. This choice rescales error values to the same order as step counts, ensuring stable bandit updates while explicitly encoding the trade-off between efficiency (fewer steps) and fidelity (lower error). We run the experiments on a single NVIDIA A100 GPU.

\begin{figure}[t]
    \centering
    
    \begin{subfigure}{0.95\textwidth}
        \centering
        \includegraphics[width=\linewidth]{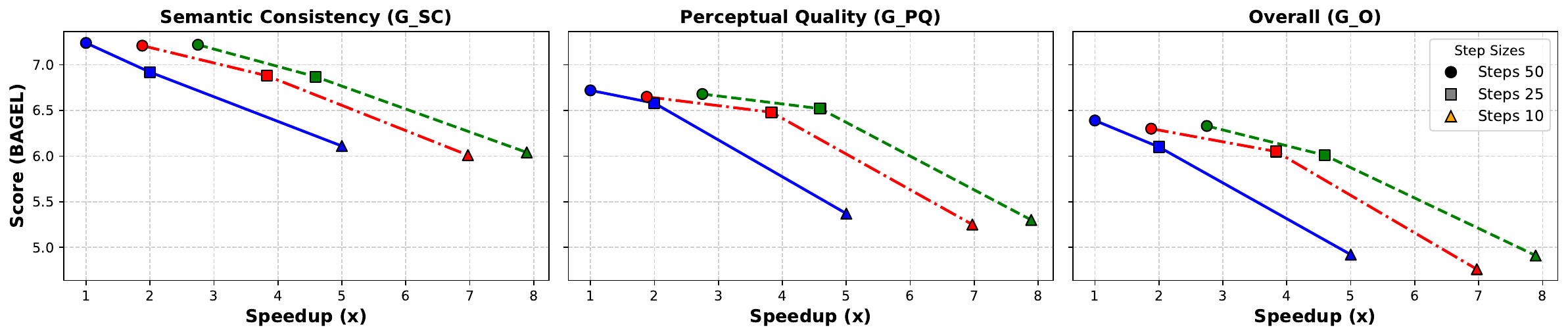}
    \end{subfigure}
    
    \vspace{0.5em}

    \begin{subfigure}{0.95\textwidth}
        \centering
        \includegraphics[width=\linewidth]{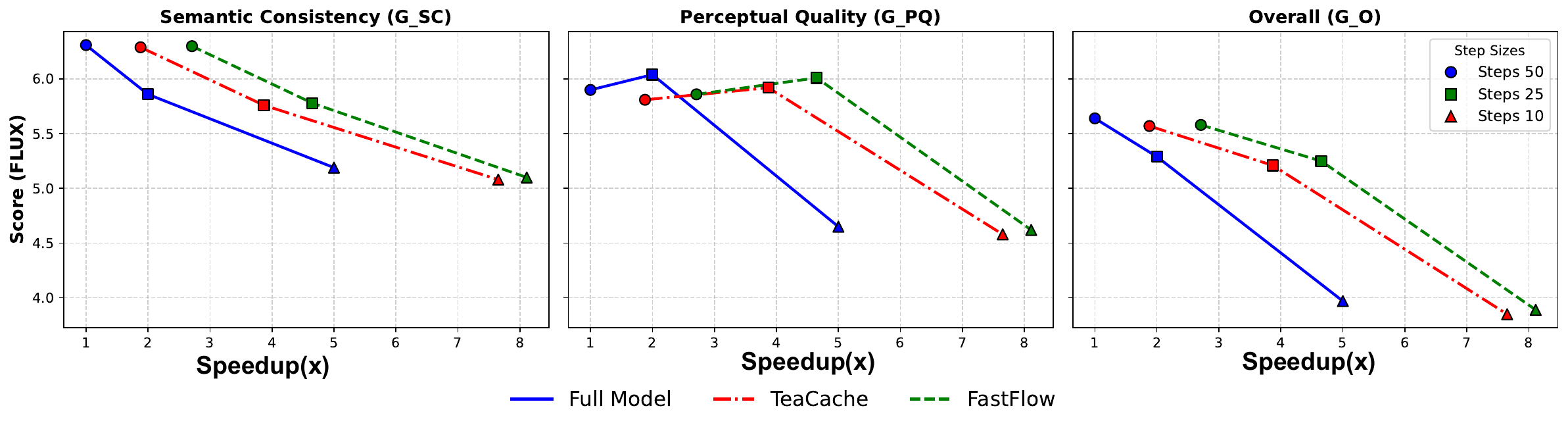}
    \end{subfigure}
    
    \vspace{-0.3cm}

    \caption{Comparison of edit quality across two models: BAGEL and FLUX. Each subfigure reports semantic consistency (G\_SC), perceptual quality (G\_PQ), and overall score (G\_O) versus speedup.}
    \label{fig:editing_quant}
\end{figure}

\begin{figure}
    \centering
    \includegraphics[scale = 0.279]{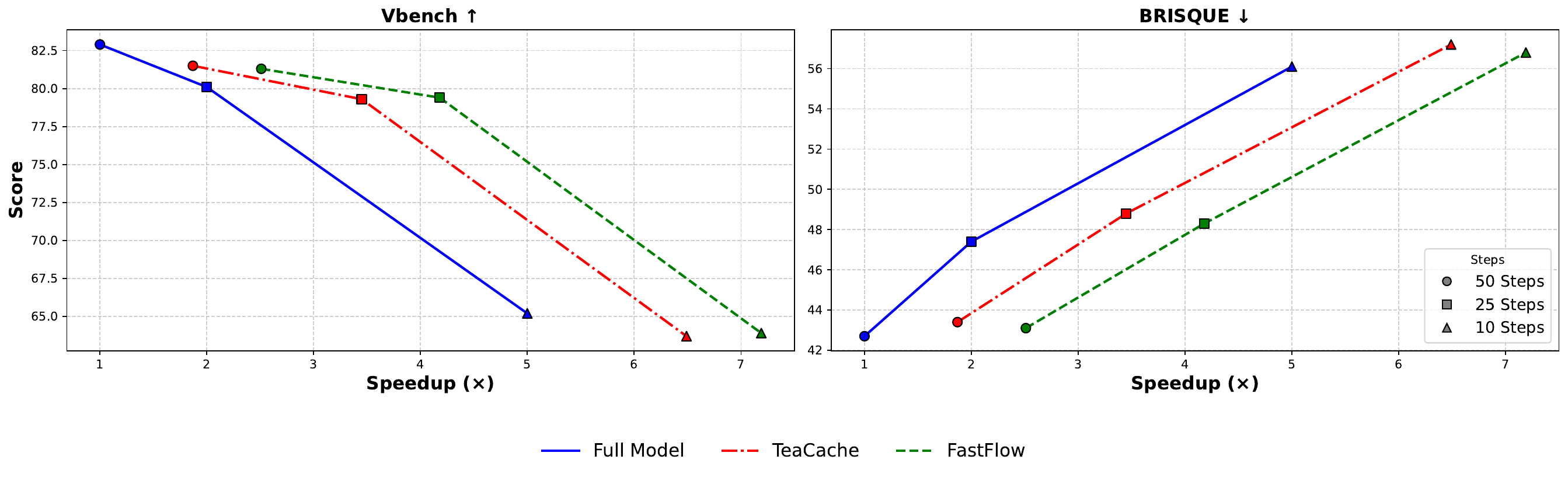}
     \vspace{-0.3cm}
    \caption{Comparison of the Video generation for the HunYuanVideo model. We report the VBench score an the BRISQUE metric for the quality of frames generated.}
    \label{fig:video_quant}
\end{figure}
\subsection{Results}
\textbf{Image Generation:} Tables \ref{tab:results_bagel_generate} and \ref{tab:results_flux_generate} present a comprehensive evaluation of \algo{} against existing baselines across multiple dimensions, including single-object (SO) and two-object (TO) generation, compositional accuracy, and object positioning. While semantic correctness is important, perceptual quality is equally critical; to capture this, we report CLIPIQA scores—a state-of-the-art Image Quality Assessment metric. Our method consistently surpasses prior approaches, achieving substantial speedups over full-step generation while maintaining competitive fidelity. Notably, speedup is measured relative to the full 50-step generation. Although InstaFlow and PerFlow (Table \ref{tab:results_bagel_generate}) are trained on Stable-Diffusion variants—rendering speedup comparisons inexact—the reported latency still provides a meaningful wall-clock comparison.

\textbf{Image Editing:} Figure \ref{fig:editing_quant} illustrates the performance of our method on image editing using BAGEL and FLUX models. Evaluations are conducted on the GEdit dataset, with GPT-4.1 serving as an automatic judge to score edits on semantic consistency (G\_SC), perceptual quality (G\_PQ), and an overall quality measure (G\_O). Our approach achieves the highest speedup among all baselines while preserving, and in many cases improving, the quality of the edits.

\textbf{Video Generation.}  
Figure~\ref{fig:video_quant} reports results on video synthesis using the \textbf{VBench} benchmark, which multiple dimensions including motion dynamics, temporal consistency, and scene composition. For perceptual assessment of individual frames, we additionally employ the no-reference \textbf{BRISQUE} metric. Our method consistently surpasses baselines, delivering sharper frames and more coherent temporal evolution while achieving substantial acceleration. 


Collectively, these results demonstrate that \algo{} offers a transformative trade-off between efficiency and quality. By dramatically reducing computation without sacrificing perceptual or semantic fidelity, our method sets a new standard for fast, high-quality image generation and editing. This opens the door to practical, real-time applications on resource-constrained devices, making advanced generative modeling more accessible and scalable.

\begin{figure}[t]
    \centering
    \begin{subfigure}[t]{0.5\textwidth}
        \centering
        \includegraphics[width=0.95\linewidth]{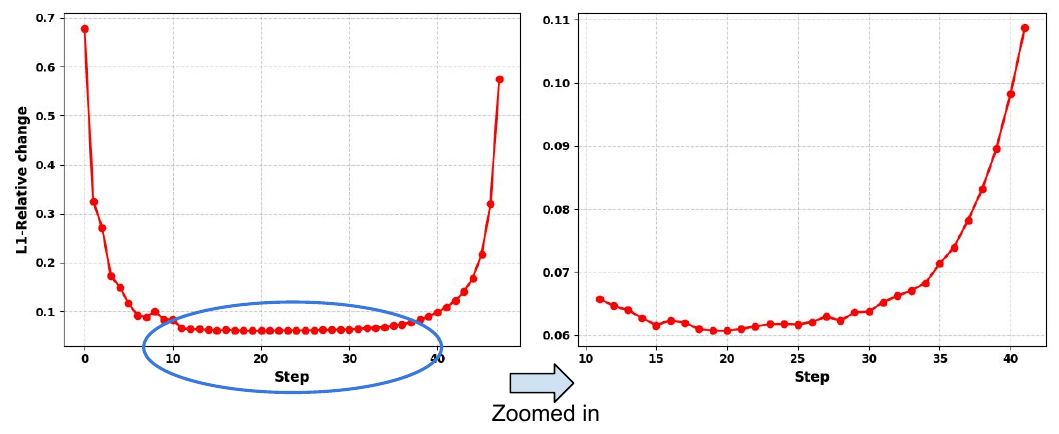}
    \end{subfigure}%
    \hfill
    \begin{subfigure}[t]{0.5\textwidth}
        \centering
        \includegraphics[width=0.95\linewidth]{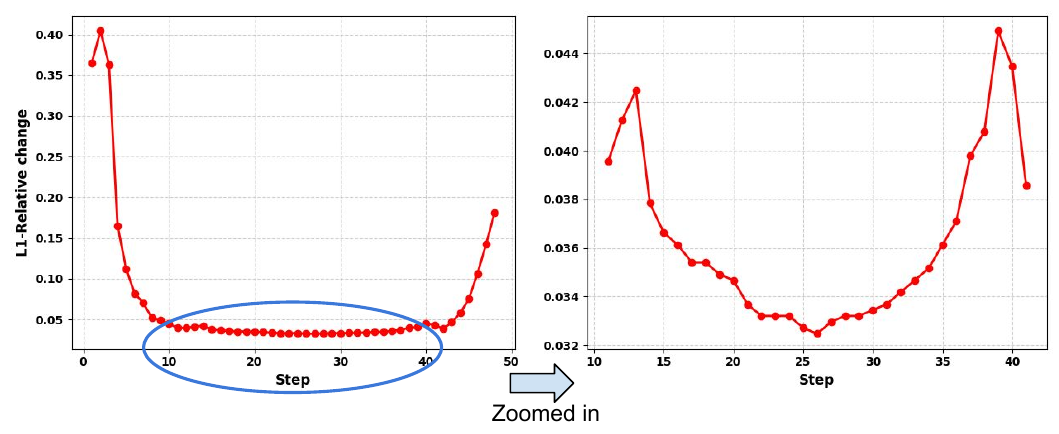}
    \end{subfigure}
    \caption{L1-relative error between consecutive velocity predictions in the BAGEL model. While the trajectories may appear constant at intermediate scales, a finer analysis uncovers subtle yet systematic variations, indicating that the underlying dynamics are not strictly stable.}
    \label{fig:velocity_motivation}
\end{figure}

\begin{figure}
    \centering
    \includegraphics[scale=0.47]{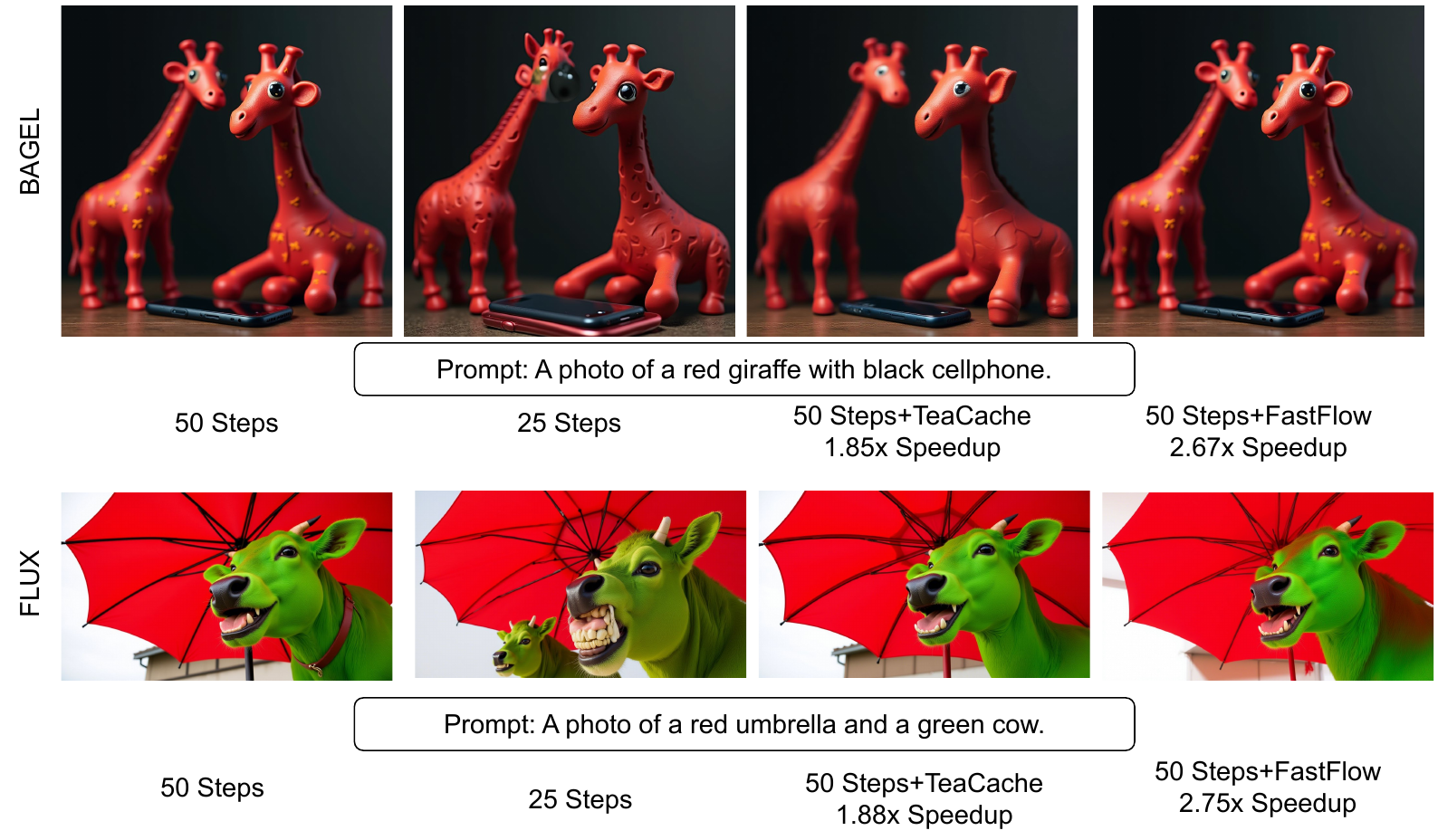}
    \caption{Generated instance for image generation task for BAGEL and FLUX models.}
    \label{fig:generation_example}
\end{figure}
\subsection{Analysis}
\textbf{Empirical evidence motivating approximation:} In Figure~\ref{fig:velocity_motivation}, we illustrate the L1-relative error $\frac{||v(x_{t_k}, t_k) - v(x_{t_{k+1}}, t_{k+1})||}{||v(x_{t_{k+1}}, t_{k+1})||}$ of velocity predictions across consecutive steps, providing insight into how the model refines trajectories over time. We observe a clear three-phase pattern: the model first establishes the coarse flow, then performs subtle refinements during intermediate steps, and finally adjusts again in the later steps to finalize the trajectory. 

Prior work \cite{liu2025timestep} has largely relied on zoomed-out trends, where intermediate updates appear nearly constant, motivating strategies that approximate future states solely from the last step. However, a finer-grained inspection reveals that these intermediate refinements, though small, are not negligible—subtle fluctuations accumulate and can lead to significant deviations if ignored. This phenomenon, consistently observed in both generation and editing tasks, highlights the need for approximation methods that capture intermediate dynamics rather than oversimplifying them.

\textbf{Qualitative Analysis.}  
Figure~\ref{fig:generation_example} illustrates image generation with BAGEL and FLUX. Simply truncating steps severely harms fidelity, especially for challenging prompts, yielding incomplete or distorted images. TeaCache produces closer outputs but still misses fine-grained details and realism. In contrast, \algo{} delivers results nearly indistinguishable from full-step generation, while being substantially faster.

Figure~\ref{fig:editing_example} shows editing examples. Direct step reduction either fails to apply edits or introduces visual artifacts. TeaCache improves but struggles with precise integration. \algo{}, however, incorporates edits similar to full generation, consistently preserving semantic intent and visual quality.

In summary, the strong speedup of \algo{} stems from its aggressive yet principled approximation, which skips redundant updates without diverging from the model trajectory. The multi-armed bandit controller further adapts to dataset and model dynamics (see Figure \ref{fig:adaptive}), learning where to skip and where to refine, enabling acceleration without sacrificing quality.

%% file: chapters/conclusion.tex
\section{Conclusion}
We introduced a new framework FastFlow, that accelerates flow-based generative models by \emph{adaptively skipping redundant steps} while safely approximating the underlying trajectory. Unlike static reduction strategies, our approach dynamically adjusts to the difficulty of incoming samples—skipping more aggressively for easy cases while allocating more model computation to harder ones. The trajectory approximation further empowers the decision-maker (MAB) to capture fine-grained variations, enabling efficient yet faithful generation. Across diverse datasets and tasks, our method consistently outperforms existing baselines, establishing a new paradigm for fast, high-fidelity generative modeling. One limitation of using MAB is the speedup may not be observed in the initial steps due to inherent explorations.

\begin{figure}
    \centering
    \includegraphics[scale=0.35]{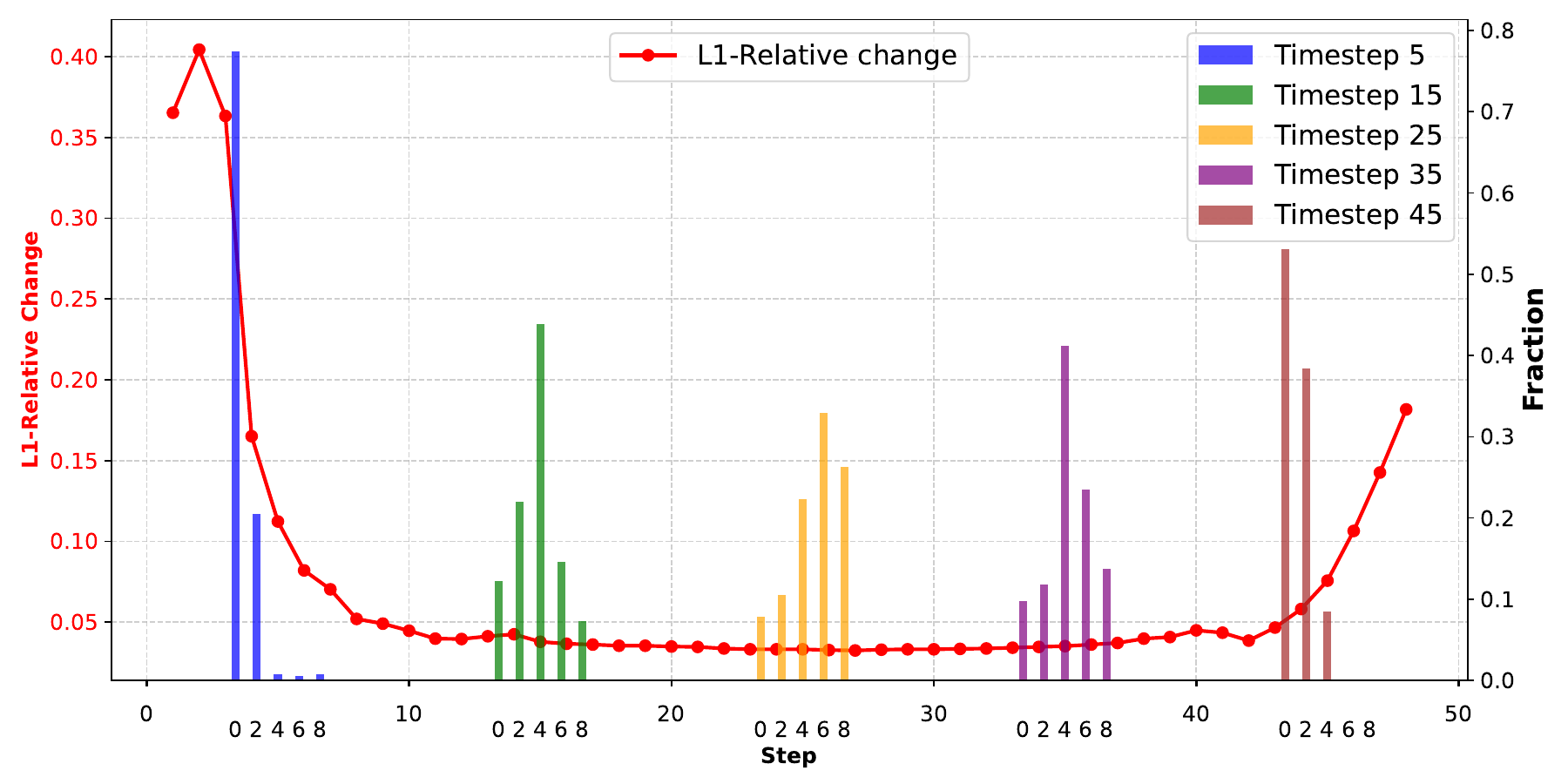}
    \caption{The figure illustrates how our method adaptively skips redundant steps in regions of slow variation, while reverting to full model evaluations when velocity changes are significant. Explanation can be found in Appendix \ref{sec:skip_patterns}}
    \label{fig:velocity_with_count}
\end{figure}
\section{Ethics and Reproducibility Statement}
We confirm that we have read and adhere to the conference Code of Ethics. To the best of our knowledge, this work is original and all relevant prior work has been properly cited. All authors have contributed to and take responsibility for the content. No LLMs were used in the preparation of this manuscript, except where explicitly disclosed as part of the proposed method.

\section*{Acknowledgements}
Divya Jyoti Bajpai is supported by the Prime Minister’s Research Fellowship (PMRF), Govt. of India.  Manjesh K. Hanawal thanks funding support from Telecom Centre of Excellence(TCOE), Department of Telecommunication (DoT) and Ministry of Electronics and Information Technology (MeitY).  We also thank funding support from Amazon IIT-Bombay AI-ML Initiative (AIAIMLI).


%% file: chapters/appendix.tex
\section{Appendix}
\begin{figure}
    \centering
    \includegraphics[scale=0.5]{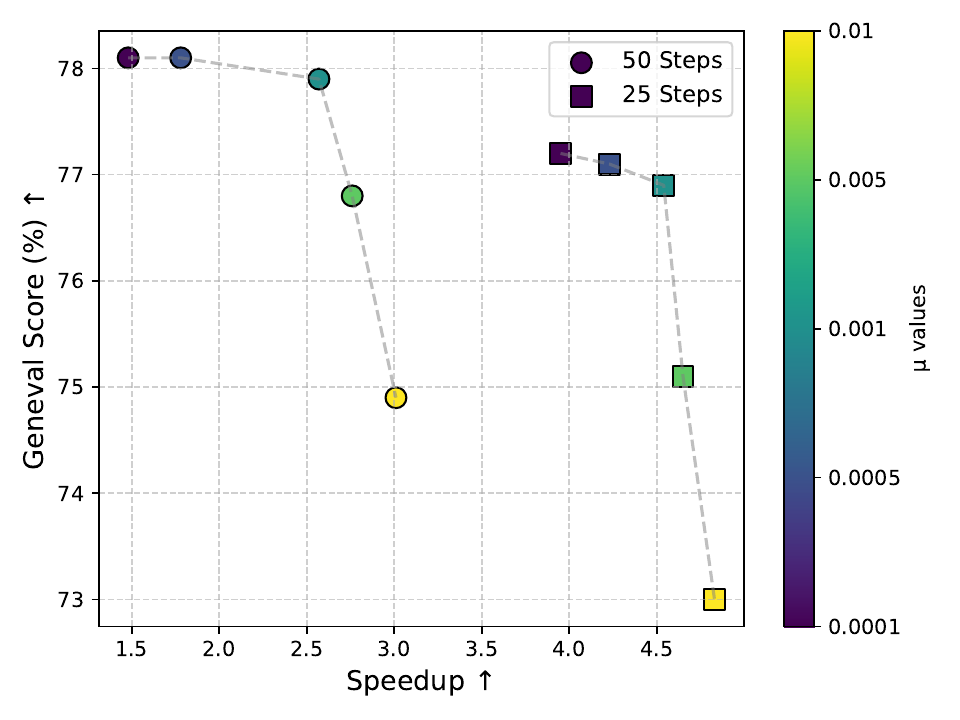}
    \caption{Caption}
    \label{fig:speevsacc}
\end{figure}

\subsection{\algo{} vs. TeaCache vs. Direct Reduction}\label{sec:comparison}

\textbf{\algo{} vs. Direct Reduction.}  
A natural question arises: why not simply reduce the number of diffusion steps directly? While straightforward, this approach is fundamentally different from \algo{}. Direct reduction applies a \emph{static truncation} of steps, which effectively assumes that the velocity at a removed step can be approximated by reusing the velocity from the previous step. This oversimplification discards useful intermediate information and can degrade generation quality.  

In contrast, \algo{} is \emph{adaptive}. Instead of discarding steps outright, it selectively identifies which steps must be computed by the model and which can be approximated using prior computations. This ensures that efficiency is achieved without sacrificing fidelity, especially for complex prompts.  

\textbf{\algo{} vs. TeaCache.}  
TeaCache approaches acceleration differently: it uses timestep embeddings of noisy inputs and applies a polynomial fitting scheme to determine whether a step should be cached. While conceptually simple, this design comes with two key limitations:  
1) It requires calibration via polynomial fitting, which introduces task- and model-specific tuning overhead.  
2) It is not truly dynamic --- the caching threshold is fixed once a target speedup is specified, and in our empirical evaluation (50 video generations and 100 image generations), we consistently observed that TeaCache converges to repetitive caching patterns (e.g., alternating steps for $2\times$ speedup), regardless of the complexity of the prompt. This indicates that the method does not effectively adapt to prompt-specific generation difficulty, but rather applies a globally fixed caching strategy.

\textbf{Advantages of \algo{}.}  
Unlike TeaCache, \algo{} is \emph{truly dynamic}. It learns on-the-fly during inference and adapts to the complexity of each prompt. For simpler generations, it aggressively reduces the number of model calls to maximize speedup, while for more complex prompts, it reverts to additional steps to preserve quality. This adaptiveness enables \algo{} to provide consistently better trade-offs between efficiency and fidelity compared to static baselines like TeaCache.

\subsection{Speedup vs performance curve}
In Figure \ref{fig:speevsacc}, we show the speedup vs performance trade-off in our method, it also gives a sense of what number of steps to choose and with what $\mu$, given a speedup. For instance, given a speedup of $2.3\times$, whether user should choose 50 steps with some high value of $\mu$ or 25 steps with some low values of $\mu$. 

\subsection{\algo{}’s skip patterns:}\label{sec:skip_patterns}
Figure~\ref{fig:velocity_with_count} illustrates how our method adapts skip decisions to the model’s internal dynamics. The plot shows the mean squared error (MSE) of consecutive velocities alongside the frequency with which different skip lengths are selected.

When velocity fluctuations are high, \algo{} consistently chooses shorter skips, preserving accuracy. During intermediate regions where changes are smoother, it shifts toward longer skips, accelerating computation. Finally, as fluctuations re-emerge toward later steps, the method automatically reduces the skip length again.
This adaptive behavior explains why \algo{} achieves significant speedups without compromising fidelity: it skips aggressively only when the trajectory is stable and reverts to fine-grained updates when the dynamics demand precision.

\subsection{Adaptiveness of \algo{}}\label{sec:adaptiveness}
The adaptiveness of our method can be further seen in the Figure \ref{fig:adaptive} we find that our method can adjust the speedup based on the complexity of the incoming samples, if a sample required more model computations it reverts back to lesser number of skips and approximations and in the case of an easier generation the model can further gain speedup by aggressively skipping multiple redundant steps making our method a go to choice for adaptive visual generation.

\subsection{Regret Curves}
In Figure \ref{fig:regret_curves}, we plot the cumulative regret of bandits deployed at different points in the denoising trajectory. Across all positions, the cumulative regret begins to flatten between 50 and 100 samples, demonstrating that the bandit rapidly identifies the near-optimal skipping strategy. The small amount of regret accumulated in the first 50 samples is expected because several arms are only marginally suboptimal; choosing them does not significantly affect performance.

A key reason for this fast convergence is the reward structure: it assigns a noticeably higher reward to the truly optimal arms, creating a strong separation between optimal and suboptimal choices. This makes the identification of the best arm easier, allowing the bandit to switch from exploration to exploitation very quickly. As a result, our method converges rapidly within a small number of samples.

\subsection{Empirical validation of error bound}
In Figure \ref{fig:error_bound}, we provide results that verify the tightness of the bound which is developed in Theorem \ref{Thm:Bound}. The bound clearly shows a good amount of tightness when the number of skipped steps are small and as the number of steps skipped incraeses tehe bound becomes weaker, which is intuitive as well since the bound plays 

\begin{figure}
    \centering
    \includegraphics[width=0.5\linewidth]{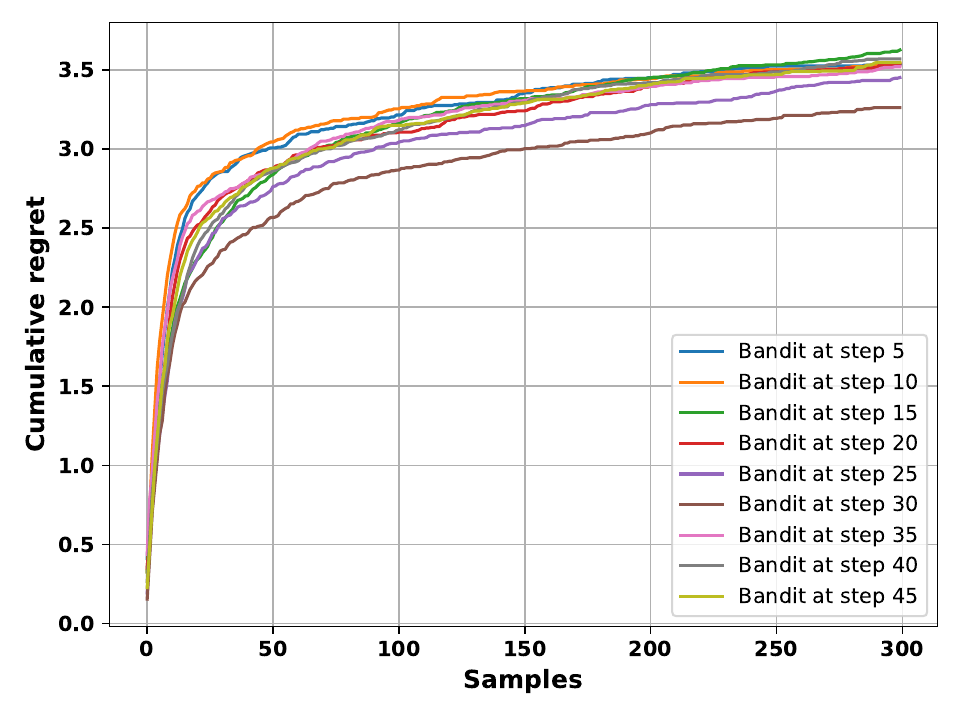}
    \caption{The regret curves for various bandits applied at different steps, the results are the average of five random runs where randomness comes from the reshuffling of data.}
    \label{fig:regret_curves}
\end{figure}

\begin{figure}
    \centering
    \includegraphics[width=0.5\linewidth]{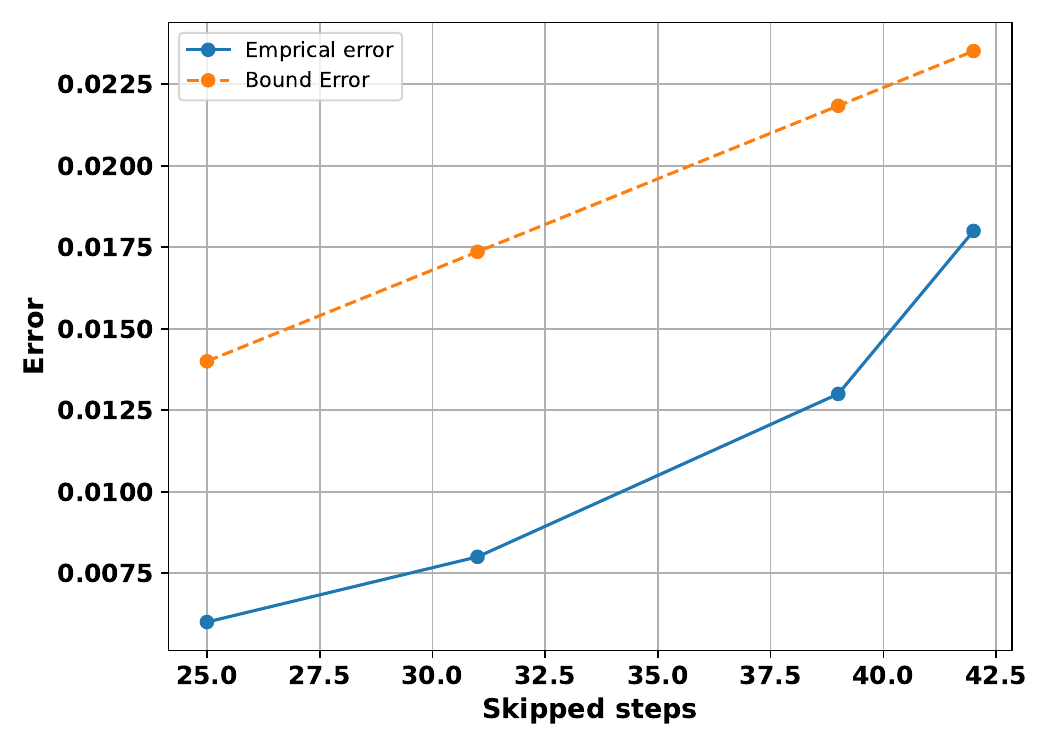}
    \caption{The plot represents empirical error vs the error developed in the Theorem \ref{Thm:Bound}.}
    \label{fig:error_bound}
\end{figure}

\begin{figure}
    \centering
    \includegraphics[scale=0.39]{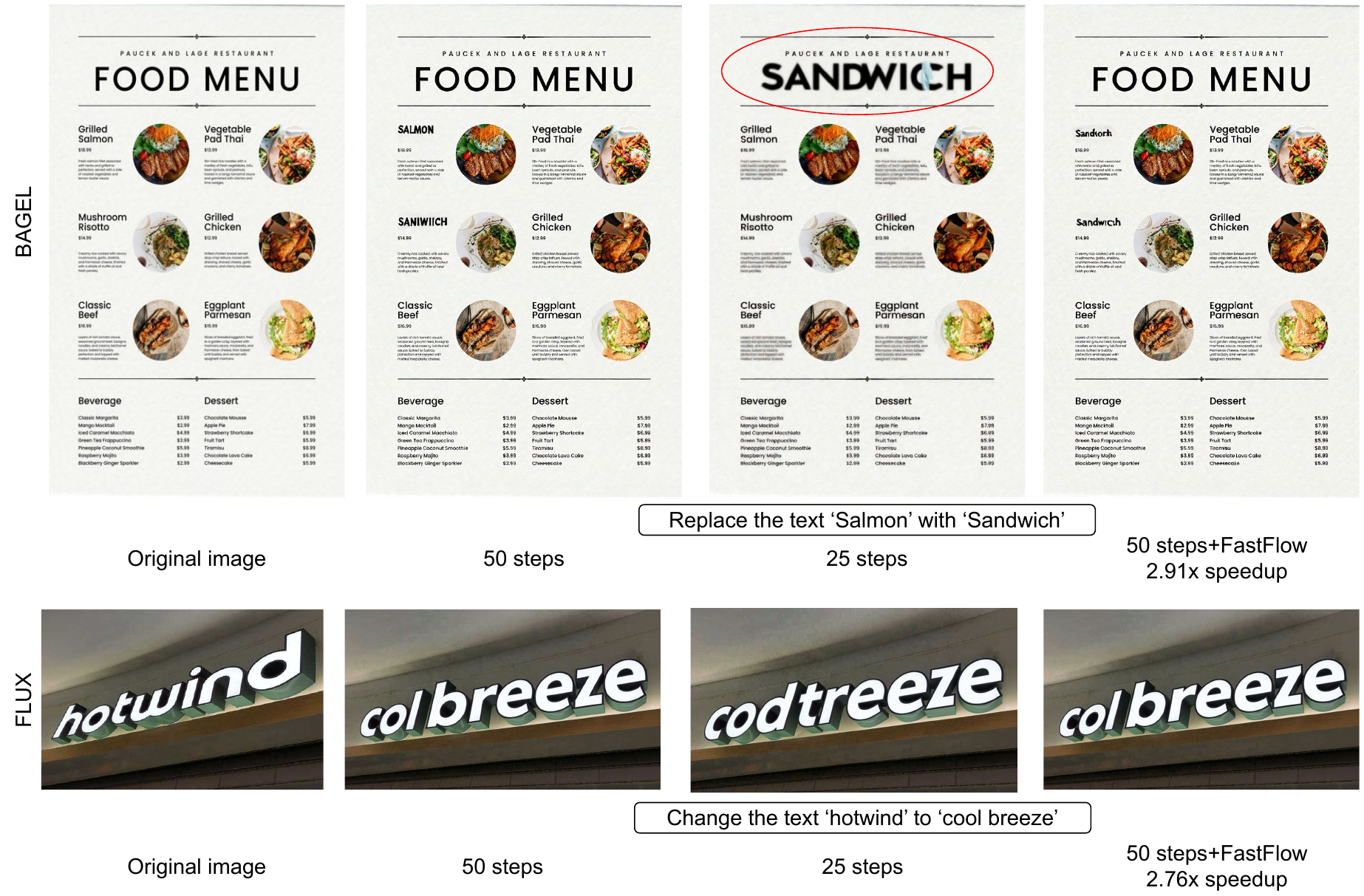}
    \caption{Generated instance for image editing task for BAGEL and FLUX models.}
    \label{fig:editing_example}
\end{figure}

\begin{figure}
    \centering
    \includegraphics[scale = 0.5]{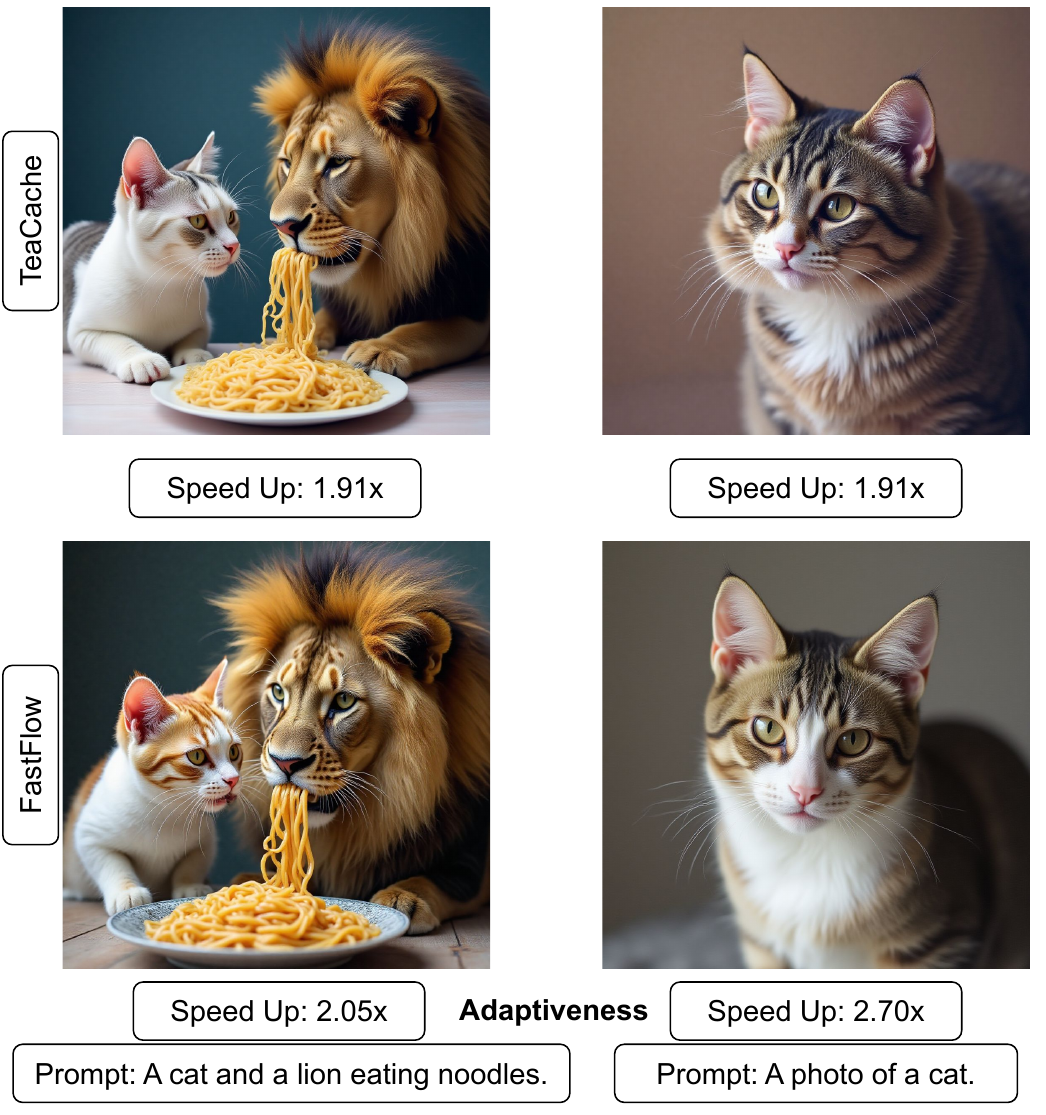}
    \caption{An illustration of the adaptiveness of our method. For simpler generation prompts, it achieves higher speedups by reducing inference calls, whereas for more complex samples, \algo{} automatically reverts to additional model calls. In contrast, baselines such as Teacache remain static across timesteps, showing no dependence on generation complexity.}
    \label{fig:adaptive}
\end{figure}

\begin{table}[t]
\centering
\small
\setlength{\tabcolsep}{6pt}
\renewcommand{\arraystretch}{1.15}
\begin{tabular}{l|cc|cc}
\toprule
 & \multicolumn{2}{c|}{\textbf{Flux}} & \multicolumn{2}{c}{\textbf{BAGEL}} \\
\cmidrule(lr){2-3} \cmidrule(lr){4-5}
\textbf{Hyperparam} & Generation & Editing & Generation & Editing \\
\midrule
Image resol. & 1360$\times$768  & --   &  1024 $\times$ 1024  &  -- \\
Guidance scale & 2.5 & 2.5 & cfg\_text\_scale = 4 & cfg\_text\_scale = 4 \\
Guidance rescale & 0 & 0 & cfg\_img\_scale = 1 & cfg\_img\_scale = 2 \\
$\mu$ & 0.001 & 0.005 & 0.001 & 0.005 \\
cfg\_interval & -- & -- & [0.4, 1.0] & [0.0, 1.0] \\
timestep\_shift & -- & -- & 3 & 3 \\
cfg\_renorm\_min & -- & -- & 0 & 0 \\
cfg\_renorm\_type & -- & -- & ``global'' & ``text\_channel'' \\
Arms & \multicolumn{2}{c|}{[0,2,4,6] (50), [0,2,4,6] (25), [0,1,2,3] (10)} 
     & \multicolumn{2}{c}{same as BAGEL} \\
\bottomrule
\end{tabular}
\caption{\textbf{Hyperparameter settings for Flux and BAGEL.} We report separate values for generation and editing. Both models share the same arm configurations, while image resolution and conditioning scales differ.}
\label{tab:hyperparams}
\end{table}

\section{Theoretical Analysis}
\label{theoretical_just}
\textbf{Assumptions:}
The velocity field $v(x, t)$ is assumed to be smooth and satisfy the following conditions for constants $L_x, M > 0$:
\begin{enumerate}
    \item Lipschitz continuity in space: $\| v(x, t) - v(y, t) \| \leq L_x \| x - y \|$ for all $x, y \in \R^d$.
    \item Bounded second time-derivative: $\left\| \frac{\partial^2 v(x, t)}{\partial t^2} \right\| \leq M$ for all $x \in \R^d$.
\end{enumerate}

\begin{theorem}
Let $\{ x_{t_k}^{\mathrm{true}} \}$ denote the trajectory obtained using the exact velocity field with the forward Euler method, and let $\{ x_{t_k}^{\mathrm{approx}} \}$ be the trajectory where velocity evaluations are skipped at a subset of steps $\mathcal{S} \subseteq \{0, \dots, T-1\}$ and are instead approximated, for simplicity, via a first-order Taylor expansion in time.

Under above assumptions, the cumulative error in the final state after $T$ steps with a uniform step size $\Delta t = 1/T$ is bounded by:
\[
e_T := \| x_{t_T}^{\mathrm{approx}} - x_{t_T}^{\mathrm{true}} \| = \bigO\left( \frac{|\mathcal{S}|}{T^3} \right).
\]
\end{theorem}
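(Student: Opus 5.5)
We would follow the standard "local truncation error plus stability" template for one-step methods, comparing the two Euler recursions step by step. Write $e_k := x_{t_k}^{\mathrm{approx}} - x_{t_k}^{\mathrm{true}}$ with $e_0 = 0$. At a non-skipped step both trajectories use the model velocity, so
\[
e_{k+1} = e_k + \Delta t\,\bigl(v(x^{\mathrm{approx}}_{t_k},t_k) - v(x^{\mathrm{true}}_{t_k},t_k)\bigr),
\]
and Assumption 1 gives $\|e_{k+1}\| \le (1+L_x\Delta t)\|e_k\|$. At a skipped step $k\in\mathcal{S}$ the approximate trajectory instead uses $\hat v_k$, the first-order (finite-difference) Taylor extrapolation in time built from previously computed velocities. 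We split the increment as
\[
\hat v_k - v(x^{\mathrm{true}}_{t_k},t_k) = \bigl(\hat v_k - v(x^{\mathrm{approx}}_{t_k},t_k)\bigr) + \bigl(v(x^{\mathrm{approx}}_{t_k},t_k) - v(x^{\mathrm{true}}_{t_k},t_k)\bigr).
\]
The second term is again controlled by $L_x\|e_k\|$. The first term is a local defect $\delta_k$.

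The key local estimate is $\|\delta_k\| \le C M \Delta t^2$. Expanding $v(x,t_k)$ about the reference time with Taylor's theorem and the integral remainder, the first-order extrapolation reproduces the constant and linear parts exactly. The remainder is $\tfrac12 \partial_t^2 v$ evaluated at intermediate times, multiplied by the square of the time offset, which is $O(\Delta t)$ times a bounded skip length. Assumption 2 then gives $\|\delta_k\|\le C M\Delta t^2$, with $C$ depending only on the maximal arm length. Hence at a skipped step
\[
\|e_{k+1}\| \le (1+L_x\Delta t)\|e_k\| + \Delta t\,\|\delta_k\| \le (1+L_x\Delta t)\|e_k\| + C M \Delta t^3 .
\]

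Unrolling with the discrete Gr\"onwall inequality, and using $(1+L_x\Delta t)^{T}\le e^{L_x}$ since $T\Delta t = 1$, we get
\[
\|e_T\| \le e^{L_x}\sum_{k\in\mathcal{S}} C M \Delta t^3 = C M e^{L_x}\,\frac{|\mathcal{S}|}{T^3},
\]
which is the claimed $\mathcal{O}(|\mathcal{S}|/T^3)$ bound. Non-skipped steps contribute no source term, only amplification, and the amplification is uniformly bounded.

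The main obstacle is justifying $\|\delta_k\| = O(\Delta t^2)$. The extrapolation uses velocities evaluated at different spatial points: $v(x_{t_p},t_p)$ and $v(x_{t_k},t_k)$ lie on the trajectory, not at a fixed $x$. So Assumption 2, which bounds only $\partial_t^2 v$ at fixed $x$, does not directly control the total-derivative Taylor remainder. We would handle this by reading the theorem's phrase "first-order Taylor expansion in time" literally: at a skipped step the approximation is $v(x^{\mathrm{approx}}_{t_k},t_{j}) + (t_k-t_j)\,\partial_t v(x^{\mathrm{approx}}_{t_k},t_j)$ at the fixed current point, so that Assumption 2 applies exactly.

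If that reading is too generous for the finite-difference version used in Algorithm 1, we would fall back on the extra smoothness permitted by "smooth velocity field." Concretely, we would assume bounded $\partial_x v$, $\partial_x\partial_t v$ and $\partial_x^2 v$ along the trajectory, so that the total second derivative $\frac{d^2}{dt^2}v(x_t,t)$ is bounded by some constant $\tilde M$. The same argument then goes through with $M$ replaced by $\tilde M$. We would also note that the spatial mismatch between $x^{\mathrm{approx}}$ and the point where the Taylor data was taken is itself $O(\Delta t)$ per step, which the $L_x$ term absorbs.
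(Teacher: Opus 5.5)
Your proposal is correct and follows essentially the same route as the paper's proof: the same splitting of the velocity mismatch into a Taylor-remainder defect at skipped steps plus a Lipschitz-propagated term $L_x e_k$, which gives $e_{k+1}\le(1+L_x\Delta t)e_k + O(M\Delta t^3)$ with the source term present only for $k\in\mathcal{S}$, unrolled via $(1+L_x\Delta t)^{T}\le e^{L_x}$; your ``literal'' reading (expansion in time at the fixed current point) is exactly what the paper implicitly uses, except that the paper takes the time offset to be a single $\Delta t$ (constant $M/2$), and your skip-length-dependent $C$ is the more honest constant. One caution on your fallback: its final remark is wrong as stated, because an $O(\Delta t)$ spatial displacement pushed through the Lipschitz bound produces an $O(\Delta t)$ velocity defect and hence only an $O(|\mathcal{S}|/T^2)$ bound, so the spatial motion must be captured inside the total-derivative expansion, and only the $O(\Delta t^2)$ gap between the discrete points and a smooth reference curve (e.g.\ the exact flow through the last evaluated point) may be handled via $L_x$.
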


\begin{proof}
Let $e_k := \| x_{t_k}^{\mathrm{approx}} - x_{t_k}^{\mathrm{true}} \|$ be the spatial error at timestep $t_k$. The forward Euler updates for the true and approximate trajectories are given by:
\begin{align*}
    x_{t_{k+1}}^{\mathrm{true}} &= x_{t_k}^{\mathrm{true}} + \Delta t \cdot v(x_{t_k}^{\mathrm{true}}, t_k) \\
    x_{t_{k+1}}^{\mathrm{approx}} &= x_{t_k}^{\mathrm{approx}} + \Delta t \cdot \tilde{v}_k
\end{align*}
where $\tilde{v}_k$ is the (potentially approximated) velocity used at step $k$.

By subtracting the two update equations and applying the triangle inequality, we derive the error recurrence:
\begin{align*}
    e_{k+1} &= \| x_{t_k}^{\mathrm{approx}} - x_{t_k}^{\mathrm{true}} + \Delta t (\tilde{v}_k - v(x_{t_k}^{\mathrm{true}}, t_k)) \| \\
    &\leq e_k + \Delta t \cdot \| \tilde{v}_k - v(x_{t_k}^{\mathrm{true}}, t_k) \|.
\end{align*}
We bound the velocity mismatch term by splitting it into the approximation error and the propagated error:
\[
\| \tilde{v}_k - v(x_{t_k}^{\mathrm{true}}, t_k) \| \leq \underbrace{\| \tilde{v}_k - v(x_{t_k}^{\mathrm{approx}}, t_k) \|}_{\text{Approximation Error}} + \underbrace{\| v(x_{t_k}^{\mathrm{approx}}, t_k) - v(x_{t_k}^{\mathrm{true}}, t_k) \|}_{\text{Propagated Error}}.
\]
The propagated error is bounded by the spatial Lipschitz condition:
\[
\| v(x_{t_k}^{\mathrm{approx}}, t_k) - v(x_{t_k}^{\mathrm{true}}, t_k) \| \leq L_x e_k.
\]
For the approximation error, if a skip occurs ($k \in \mathcal{S}$), we use the bound on the second derivative. The remainder term of a first-order Taylor expansion gives:
\[
\| \tilde{v}_k - v(x_{t_k}^{\mathrm{approx}}, t_k) \| \leq \frac{M}{2} (\Delta t)^2.
\]
If no skip occurs ($k \notin \mathcal{S}$), this error is 0. Combining these bounds, the full error recurrence becomes:
\begin{align*}
    e_{k+1} &\leq e_k + \Delta t \left( \mathbbm{1}_{\{k \in \mathcal{S}\}} \cdot \frac{M}{2} (\Delta t)^2 + L_x e_k \right) \\
    &\leq (1 + L_x \Delta t) e_k + \mathbbm{1}_{\{k \in \mathcal{S}\}} \cdot \frac{M}{2} (\Delta t)^3.
\end{align*}
We unroll this recurrence starting from the initial condition $e_0 = 0$:
\[
e_T \leq \sum_{k=0}^{T-1} \mathbbm{1}_{\{k \in \mathcal{S}\}} \cdot \frac{M}{2} (\Delta t)^3 \cdot (1 + L_x \Delta t)^{T-1-k}.
\]
Using the inequality $(1 + x) \leq e^x$, we can bound the exponential term:
\[
(1 + L_x \Delta t)^{T-1-k} \leq e^{L_x \Delta t (T-1-k)} \leq e^{L_x T \Delta t} = e^{L_x}.
\]
Substituting this back, we get a sum over the $|\mathcal{S}|$ steps where an error was introduced:
\[
e_T \leq \sum_{k \in \mathcal{S}} \frac{M}{2} (\Delta t)^3 \cdot e^{L_x} = |\mathcal{S}| \cdot \frac{M e^{L_x}}{2} (\Delta t)^3.
\]
Finally, with $\Delta t = 1/T$, the bound is:
\[
e_T \leq \left( \frac{|\mathcal{S}| M e^{L_x}}{2} \right) \frac{1}{T^3},
\]
which proves that $e_T = \bigO\left( \frac{|\mathcal{S}|}{T^3} \right)$.
\end{proof}
\textbf{Interpretation:} The upper bound on the overall error can be interpreted as:
\begin{enumerate}
    \item We show that global error between our update to the Euler solver, and Euler solver with full evaluation is small (as above).
    \item The overall error grows linearly with the number of skipped steps, but decays rapidly with the total number of timesteps.
    \item The proof provides a strong theoretical motivation for our choice of the bandit reward. The goal is to maximize the number of skipped steps, while lowering the local error.
    \item Approximating velocities in regions of large curvature (i.e., large $M$) increases the overall error.
\end{enumerate}

\section{FastFlow and Linear multi-step methods}
\label{lmls_methods}

Given the first-order Taylor approximation for velocity, and Euler solver for FastFlow, a re-formulation to cast our updates in form of a Linear multi-step can be obtained as follows.\newline

\noindent
Assume that as per our method and a uniform step-size, we evaluate the velocities $v(x_k,k), v(x_{k-1},k-1)$ and then skip $m$ steps as per a bandit action.  Then $x_{k+m+1}$, with a Euler step update is given by:

\begin{equation}
x_{k+m+1} = x_{k+m} + h\cdot v(x_{k+m},k+m)
\end{equation}

\noindent
We approximate $x_{k+m}$ and $v(x_{k+m},k+m)$ as follows:
\begin{equation}
x_{k+m} \approx x_{k} + m\cdot h\cdot v(x_{k},k)
\end{equation}
\begin{equation}
v(x_{k+m},k+m) \approx v(x_{k},k) + m \cdot h \frac{v(x_{k},k) - v(x_{k-1},k-1)}{h}
\end{equation}

\noindent
Updating $(2),(3)$ in $(1)$, with some simplification results in:
\begin{equation}
x_{k+m+1} = x_{k} + h[(2m+1)\cdot v(x_k,k) - m\cdot v(x_{k-1},k-1)]
\end{equation}

\noindent
This is an multi-step update with order=$1$. The update is consistent, and the Local Truncation Error (at step $k+m+1$, when bandit skips $m$ steps at step $k$)  can be shown to be as (after accounting for a division by $h$ in \cite{suli2003numerical}):

\begin{equation}
\tau_{k,m} = \frac{m^2+1}{2\cdot(m+1)}\cdot h^2\cdot \frac{d^2 x}{d t^2}(t_k) + \bigO(h^3) \leq \frac{m}{2}\cdot h^2\cdot \frac{d^2 x}{d t^2}(t_k) + \bigO(h^3)
\end{equation}